\algnewcommand{\LeftComment}[1]{\Statex \(\triangleright\) #1}
\def\unit{\myfactor cm}
\pgfplotsset{ ylabel near ticks,                 
              xlabel near ticks,                 
              tick label style = {font=\footnotesize},   
              label style = {font=\footnotesize}, 
              title style = {font=\footnotesize},
            }
\tikzstyle{empty node} = [ circle, 
\tikzstyle{node} = [ empty node, 
\tikzstyle{blue node} = [ empty node, 
\tikzstyle{red node} = [ empty node, 
\tikzstyle{green node} = [ empty node, 
\tikzstyle{black node} = [ empty node, 
\tikzstyle{empty dot} = [ circle, 
\tikzstyle{dot} = [ empty dot, 
\tikzstyle{blue dot} = [ empty dot, 
\tikzstyle{red dot} = [ empty dot, 
\tikzstyle{black dot} = [ empty dot, 
\tikzstyle{edge}                 = [shorten >=1pt, shorten <=1pt]
\tikzstyle{directed edge}        = [edge, -stealth]
\tikzstyle{double directed edge} = [edge, stealth-stealth]
\tikzstyle{tight edge}                 = [shorten >=0pt, shorten <=0pt]
\tikzstyle{block} = [ rectangle,
\newtheorem{assumption}{\hspace{0pt}\bf Assumption}
\newtheorem{theorem}{\hspace{0pt}\bf Theorem}
\title{WIDE AND DEEP GRAPH NEURAL NETWORKS WITH DISTRIBUTED ONLINE LEARNING}
\thanks{Supported by NSF CCF 1717120, ARO W911NF1710438, ARL DCIST CRA W911NF-17-2-0181, ISTC-WAS and Intel DevCloud.}}
\begin{document}
\ninept
\maketitle

\begin{abstract}
Graph neural networks (GNNs) learn representations from network data with naturally distributed architectures, rendering them well-suited candidates for decentralized learning. Oftentimes, this decentralized graph support changes with time due to link failures or topology variations. These changes create a mismatch between the graphs on which GNNs were trained and the ones on which they are tested. Online learning can be used to retrain GNNs at testing time, overcoming this issue. However, most online algorithms are centralized and work on convex problems (which GNNs rarely lead to). This paper proposes the Wide and Deep GNN (WD-GNN), a novel architecture that can be easily updated with distributed online learning mechanisms. The WD-GNN comprises two components: the wide part is a bank of linear graph filters and the deep part is a GNN. At training time, the joint architecture learns a nonlinear representation from data. At testing time, the deep part (nonlinear) is left unchanged, while the wide part is retrained online, leading to a convex problem. We derive convergence guarantees for this online retraining procedure and further propose a decentralized alternative. Experiments on the robot swarm control for flocking corroborate theory and show potential of the proposed architecture for distributed online learning.
\end{abstract}

\begin{keywords}
Graph neural networks, graph filters, distributed learning, online learning, convergence analysis.
\end{keywords}


\section{Introduction} \label{sec:intro}

Graph neural networks (GNNs) \cite{Bruna2013, Defferrard2016, Fernando2019, Wu19-SGC, Kipf2017, Xu19-GIN} are nonlinear representation maps that have been shown to perform successfully on graph data in a wide array of tasks involving citation networks \cite{Kipf2017}, recommendation systems \cite{Ying2018}, source localization \cite{gao2020stochastic}, wireless communications \cite{gao2020resource} and robot swarms \cite{tolstaya2019learning}. GNNs consist of a cascade of layers, each of which applies a graph convolution (a graph filter) \cite{Ortega18-GSP}, followed by a pointwise nonlinearity \cite{Bruna2013, Defferrard2016, Kipf2017, Fernando2019, Xu19-GIN, Wu19-SGC}. One of the key aspects of GNNs is that they are local and distributed. They are local since they require information only from neighboring nodes, and distributed since each node can compute its own output, without need for a centralized unit.

GNNs have been developed as a nonlinear representation map that is capable of leveraging graph structure present in data \cite{Gama20-GNNs}. The most popular model for GNNs is the one involving graph convolutions (formally known as graph FIR filters). Several implementations of this model were proposed, including \cite{Bruna2013} which computes the graph convolution in the spectral domain, \cite{Defferrard2016} which uses a Chebyshev polynomial implementation, \cite{Fernando2019, Wu19-SGC} which use a summation polynomial, and \cite{Kipf2017, Xu19-GIN} which reduce the polynomial to just the first order. All of these are different implementations of the same representation space given by the use of graph convolutional filters to regularize the linear transform of a neural network model. Other popular GNN models include graph attention transforms \cite{Velickovic18-GraphAttentionNetworks} and CayleyNets \cite{Levie17-CayleyNets}; see \cite{Isufi20-EdgeNets} for a general framework.

Oftentimes, however, problems of interest exhibit (slight) changes in data structure between training and testing sets or involve dynamic systems \cite{frahling2008sampling, helwa2017multi, Gama19-Control}. For example, in the case of the robot swarm, the graph is determined by the communication network between robots which is, in turn, determined by their physical proximity. Thus, if robots move, the communication links will change, and the graph support will change as well. Therefore, we oftentimes need to adapt to (slightly) new data structures. GNNs have been shown to be resilient to changes, as proven by the properties of permutation equivariance and stability \cite{ZouLerman19-Scattering, Gama19-Stability}. While these properties guarantee transference, we can further improve the performance by leveraging online learning approaches.

Online learning is a well-established paradigm that tracks the optimizer of time-varying optimization problems and has been successful as an enabler in the fields of machine learning and signal processing \cite{dabbagh2005online}. In a nutshell, online algorithms tackle each modified time instance of the optimization problem, by performing a series of updates on the previously obtained solutions. In order to leverage online learning in GNNs we face two major roadblocks. First, optimality bounds and convergence guarantees are given only for convex problems \cite{shalev2012online}. Second, online optimization algorithms assume a centralized setting. The latter one is particularly problematic since it violates the local and distributed nature of GNNs, upon which much of its success has been built \cite{tolstaya2019learning}.

Online learning has been investigated in designing neural networks (NNs) for dynamically varying problems. Specifically, \cite{Sanz2012, Li2004} develop online algorithms for feedforward neural networks with applications in dynamical condition monitoring and aircraft control. More recently, online learning has been used in convolutional neural networks for visual tracking, detection and classification \cite{Hong2015, Molchanov2016}. While these works develop online algorithms for NNs, analysis on the convergence of these algorithms is not presented, except for \cite{Ho20102} that proves the convergence of certain online algorithms for radial neural networks only.

This paper puts forth the Wide and Deep Graph Neural Network (WD-GNN) architecture, that is amenable to distributed online learning, while keeping convergence guarantees. In analogy to \cite{cheng2016wide}, we define the WD-GNN as consisting of two components, a deep part which is a nonlinear GNN and a wide part which is a bank of linear graph filters (Section \ref{WD_GNN}). We propose to have an offline phase of training, that need not be distributed, and then an online \emph{retraining} phase, where only the wide part is adapted to the new problem settings. In this way, we learn a nonlinear representation, that can still be adapted online without sacrificing the convex nature of the problem (Section \ref{DOL}). We further develop an algorithm for \emph{distributed} online learning. We prove convergence guarantees for the proposed online learning procedure (Section \ref{PA}). Finally, we perform simulated experiments on the robot swarm control (Section \ref{E}). We note that proofs, implementation details and another experiment involving movie recommendation systems, can be found in appendices.


\section{Wide and Deep Graph Neural Network}
\label{WD_GNN}

Let $\ccalG = \{\ccalV, \ccalE, \ccalW\}$ describe a \emph{graph}, where $\ccalV = \{n_{1},\ldots,n_{N}\}$ is the set of $N$ nodes, $\ccalE \subseteq \ccalV \times \ccalV$ is the set of edges, and $\ccalW : \ccalE \to \reals$ is the edge weight function. In the case of the robot swarm, each node $n_{i} \in \ccalV$ represents a robot, each edge $(n_{j}, n_{i}) \in \ccalE$ represents the communication link between robot $n_{j}$ and $n_{i}$, and the weight $\ccalW(n_{j}, n_{i}) = w_{ji} \geq 0$ models the communication channel. The data of interest is defined on top of the graph and is described by means of a \emph{graph signal} $\bbX : \ccalV \to \reals^{F}$ which assigns an $F$-dimensional \emph{feature vector} to each node. For example, in the robot swarm, the signal $\bbX(n_{i})$ represents the \emph{state} of robot $n_{i}$, typically described by its position, velocity or acceleration. The collection of features across all nodes in the graph can be denoted with a matrix $\bbX \in \reals^{N \times F}$ which we call a graph signal as well. Note that each row of $\bbX$ corresponds to the feature vector at each node, whereas each column corresponds to the collection of $f$th feature across all nodes.

We leverage the framework of graph signal processing (GSP) as the mathematical foundation to learn from graph signals \cite{Ortega18-GSP}. Note that $\bbX$ is a $N \times F$ matrix that bares no information about the underlying graph (beyond the fact that it has $N$ nodes). To relate the graph signal to the specific graph it is supported on, we define the support matrix $\bbS \in \reals^{N \times N}$ that satisfies $[\bbS]_{ij} = 0$ if $i \neq j$ and $(n_{j},n_{i}) \notin \ccalE$, capturing the graph structure. Common examples in the literature include the adjacency matrix, the Laplacian matrix, and other normalized versions. The key aspect of the support matrix is that it respects the sparsity of the graph. Thus, when using it as a linear operator on the data $\bbS \bbX$, we observe that the output at node $n_{i}$ for feature $f$ becomes
\begin{equation} \label{eqn:graphShift}
    [\bbS \bbX]_{if} = \sum_{j=1}^{N} [\bbS]_{ij} [\bbX]_{jf} = \sum_{j : n_{j} \in \ccalN_{i}} [\bbS]_{ij} [\bbX]_{jf}
\end{equation}
where the second equality emphasizes the sparse nature of $\bbS$. That is, only the values of the $f$th feature at the neighboring nodes $n_{j} \in \ccalN_{i}$ for $\ccalN_{i} = \{n_{j} \in \ccalV: (n_{j},n_{i}) \in \ccalE\}$ are required to compute the output $\bbS \bbX$ at each node. This renders $\bbS \bbX$ a linear operation that only needs information from direct neighbors (local) and that can be computed separately at each node (distributed). The operation $\bbS \bbX$ is at the core of GSP since it effectively relates the graph signal with the graph support, referred to as a \emph{graph shift} \cite{Ortega18-GSP}.

While, in general, we can think of graph data as given by a pair $(\bbX,\bbS)$ consisting of the graph signal $\bbX$ and its support $\bbS$, we only regard $\bbX$ as \emph{actionable} and the support $\bbS$ is determined by physical constraints of the problem. Motivated by the previous study for recommendation systems \cite{cheng2016wide}, we propose the Wide and Deep Graph Neural Network (WD-GNN) architecture. It is a nonlinear map $\bbPsi:\reals^{N \times F} \to \reals^{N \times G}$ comprising two components
\begin{equation} \label{eqn:WDGNN}
    \bbPsi(\bbX;\bbS, \ccalA, \ccalB) = \alpha_{\text{D}} \bbPhi(\bbX; \bbS, \ccalA) + \alpha_{\text{W}} \bbB(\bbX; \bbS, \ccalB) + \beta
\end{equation}
where $\bbPhi(\bbX;\bbS, \ccalA)$ is called the \emph{deep part} and is a graph neural network (GNN), and $\bbB(\bbX; \bbS,\ccalB)$ is called the \emph{wide part} and is a bank of graph filters. The scalars $\alpha_{\text{D}}$, $\alpha_{\text{W}}$ and $\beta$ are combination weights\footnote{The combination weights $\alpha_{\text{D}}$, $\alpha_{\text{W}}$ and $\beta$ can also be considered as architecture parameters and be trained, if necessary.}. 

\subsection{Wide Component: Bank of Graph Filters}

The wide component is a \emph{bank of graph filters} \cite{Ortega18-GSP}. It is defined as a linear mapping between graph signals $\bbB: \reals^{N \times F} \to \reals^{N \times G}$, characterized by a set of weights or \emph{filter taps} $\ccalB = \{\bbB_{k} \in \reals^{F \times G}\}_{k=0}^K$ as follows
\begin{equation} \label{eqn:graphConv}
    \bbB(\bbX; \bbS, \ccalB) = \sum_{k = 0}^{K} \bbS^{k} \bbX \bbB_{k}.
\end{equation}
where the output is another graph signal of dimensions $N \times G$. The operation in \eqref{eqn:graphConv} is often called a \emph{graph convolution} \cite{Bruna2013, Fernando2019}. With the multi-dimensional nature of the feature vector, the graph convolution \eqref{eqn:graphConv} acts analogously to the application of a bank of $FG$ filters, hence the name. Oftentimes, though, we refer to \eqref{eqn:graphConv} as simply a \emph{graph filter} or a \emph{graph convolution} for convenience.

The filtering operation as in \eqref{eqn:graphConv} is distributed and local, as it can be computed at each node with information relied by neighboring nodes only. To see this, note that the multiplication on the left by $\bbS^{k}$ mixes information from different nodes (in the columns of $\bbX$). However, since $\bbS$ respects the sparsity of the graph, $\bbS \bbX$ is distributed [cf. \eqref{eqn:graphShift}] and $\bbS^{k} \bbX = \bbS(\bbS^{k-1}\bbX)$ can be seen as $k$ repeated applications of $\bbS$, then only exchanging information with direct neighbors $k$ times is required. Multiplication on the right by $\bbB_{k}$ carries out a linear combination of entries in the row of $\bbX$, but since each row is a collection of $F$ features at individual node, then they do not involve any exchange of information with neighboring nodes. Therefore, computing the output of a graph filter \eqref{eqn:graphConv} is carried out entirely in a distributed manner (each node computes its own output feature vector) and involves only local information (there is no need to know the structure nor the global information of entire graph).

In practice, the nodes do not need to know $\bbS$ at implementation time. They only need to have communication capabilities to receive the information from neighboring nodes, and computational capabilities to compute a linear combination of the information received from the neighbors. They \emph{do not require} full knowledge of the graph, but only of their immediate neighbors. Thus, the distributed implementation scales seamlessly to the graph filtering operation \cite{Gama19-Stability}. We fundamentally use \eqref{eqn:graphConv} as a mathematical framework that offers a condensed description of the communication exchanges that happen in a network.

\subsection{Deep Component: Graph Neural Network}

The deep component is a convolutional \emph{graph neural network} (GNN). It is defined as a nonlinear mapping between graph signals $\bbPhi: \reals^{N \times F} \to \reals^{N \times G}$, built as a cascade of graph filters [cf. \eqref{eqn:graphConv}] and pointwise nonlinearities
\begin{equation} \label{eqn:GCNN}
    \bbPhi(\bbX; \bbS, \ccalA) = \bbX_{L} \quad \text{with} \quad \bbX_{\ell} = \sigma \bigg( \sum_{k=0}^{K} \bbS^{k} \bbX_{\ell-1} \bbA_{\ell k} \bigg)
\end{equation}
for $\ell = 1,\ldots, L$, with $\sigma:\reals \to \reals$ a pointwise nonlinear function, which, for ease of exposition, denotes its entrywise application in \eqref{eqn:GCNN}; and characterized by the set of filter taps $\ccalA = \{ \bbA_{\ell k} \in \reals^{F_{\ell-1} \times F_{\ell}} \ , \ k = 0,\ldots, K, \ell=1,\ldots,L\}$. The graph signal $\bbX_{\ell}$ at each layer has $F_{\ell}$ features, and the input is $\bbX_{0} = \bbX$ so that $F_{0} = F$ while the output has $F_{L}=G$ features. The GNN is distributed and local inherited immediately from the graph filter since the pointwise nonlinearity does not affect the distributed implementation. 


\section{Distributed online learning} \label{DOL}

We train the WD-GNN \eqref{eqn:WDGNN} by solving the empirical risk minimization (ERM) problem\footnote{We took the license to define the ERM problem as in \eqref{eqn:ERM} so as to include supervised and unsupervised problems in a single framework. To use \eqref{eqn:ERM} for a supervised problem, we just extend $J$ to operate on an extra input representing the label given in the training set.} for some cost function $J: \reals^{N \times G} \to \reals$ in a given training set $\ccalT = \{\bbX_{1},\ldots,\bbX_{|\ccalT|}\}$
\begin{equation} \label{eqn:ERM}
    \min_{\ccalA, \ccalB} \frac{1}{|\ccalT|}\sum_{\bbX \in \ccalT} J \big( \bbPsi(\bbX;\bbS, \ccalA, \ccalB) \big).
\end{equation}
The ERM on a nonlinear neural network model is typically nonconvex. Training is the procedure of applying some SGD-based optimization algorithm for some number of iterations or training steps to arrive at some set of parameters $\ccalA^{\dag}$ and $\ccalB^{\dag}$. Note that we train the WD-GNN jointly by optimizing parameters of wide and deep components simultaneously. Also note that the number of parameters in $\ccalA$ and $\ccalB$ is determined by the hyperparameters $L$ (number of layers), $K_{\ell}$ (number of filter taps per layer) and $F_{\ell}$ (number of features per layer) for the deep part, and $K$ (number of filter taps) for the wide part.

In many problems of interest, the data structures may change from the training phase to the testing phase, or we may consider dynamic systems where the scenario changes naturally with time. The problem of controlling a robot swarm, for instance, exhibits both since different initializations of positions and velocities of the swarm lead to different structures between training and testing, and since inevitable movements of robots cause the communications links between them to change. \emph{Online learning} addresses this problem by proposing optimization algorithms that adapt to a continuously changing problem \cite{dabbagh2005online}. It operates by adjusting parameters repetitively for each time instance of the problem. However, online algorithms require convexity of the ERM problem to provide optimality bounds as well as convergence guarantees, and the ERM problem \eqref{eqn:ERM} using the WD-GNN is rarely convex.

To tackle this issue we propose to only \emph{retrain} the \emph{wide} component of the WD-GNN. By considering the deep part fixed $\ccalA = \ccalA^{\dag}$ as obtained from solving \eqref{eqn:ERM} over the training set, we focus on the wide part, which is linear. We then obtain a new ERM problem, now convex, that can be solved online to find a new set of parameters $\ccalB$. In essence, we leverage the deep part to learn a \emph{nonlinear representation} from the training set in the offline phase, and then adapt it online to the testing set, but only up to the extent of linear transforms.

Let $\ccalA^{\dag}$ and $\ccalB^{\dag}$ be the parameters learned from the offline phase. At testing time, the implementation scenario may differ from the one that is used for training, leading to a time-varying optimization problem of the form
 \begin{equation}\label{eq:timevaryingp}
\min_{\ccalA,\ccalB} J_t\big(\bbPsi(\bbX_t; \bbS_t, \ccalA, \ccalB)\big)
\end{equation}
where $J_t(\cdot)$, $\bbX_t$ and $\bbS_t$ are  the loss function, the observed signal and the graph structure at time $t$, respectively. In the online phase we fix the deep part $\ccalA = \ccalA^{\dag}$, converting the WD-GNN to a linear convex model. Then, we retrain the wide part online based on the changing scenario [cf. \eqref{eq:timevaryingp}]. More specifically, we let $\ccalB_0 = \ccalB^{\dag}$ initially and, at time $t$, we have parameters $\ccalA^{\dag}$ and $\ccalB_t$, input signal $\bbX_t$, output $\bbPsi(\bbX_t; \bbS_t, \ccalA^{\dag}, \ccalB_t)$, and loss $J_t\big(\bbPsi(\bbX_t; \bbS_t, \ccalA^{\dag}, \ccalB_t)\big)$. We then perform a few (probably one) gradient descent steps with step size $\gamma_t$ to update $\ccalB_t$
 \begin{equation}\label{eq:onlinelearning}
\ccalB_{t+1} = \ccalB_{t}-\gamma_t \nabla_{\ccalB} J_t\big(\bbPsi(\bbX_t; \bbS_t, \ccalA^{\dag}, \ccalB_t)\big).
\end{equation}

\noindent One major drawback in the above online learning procedure is that it is centralized, violating the distributed nature of the WD-GNN. To overcome this issue, we introduce the following distributed online algorithm.

\subsection{Distributed Online Algorithm}

In decentralized problems, each node $n_i$ only has access to a local loss $J_{i,t}\big(\bbPsi(\bbX_t; \bbS_t, \ccalA_i, \ccalB_i)\big)$ with local parameters $\ccalA_i$ and $\ccalB_i$. The goal is to coordinate nodes to minimize the sum of local costs $\sum_{i=1}^N J_{i,t}\big(\bbPsi(\bbX_t; \bbS_t, \ccalA_i, \ccalB_i)\big)$ while keeping local parameters equal to each other, i.e., $\ccalA_i = \ccalA$ and $\ccalB_i = \ccalB$ for all $i=1,\ldots,N$. We can then recast problem \eqref{eq:timevaryingp} as a constrained optimization one
  \begin{align}\label{eq:distvp}
&\min_{\{\bbB_i\}_{i=1}^N} \sum_{i=1}^N J_{i,t}\big(\bbPsi(\bbX_t; \bbS_t, \ccalA^{\dag}, \ccalB_i)\big),\\
&\quad {\rm s.t.}\quad \ccalB_i=\ccalB_j~\forall~i,j\!:\!n_j \in \ccalN_i\nonumber
\end{align}
where $\ccalA_i = \ccalA^{\dag}$ for all $i=1,\ldots,N$ since the deep part is fixed. The constraint $\ccalB_i=\ccalB_j$ for all $i,j:n_j \in \ccalN_i$ indicates that $\ccalB_i=\ccalB$ for all $i=1,\ldots,N$ under the assumed connectivity of the graph. To solve \eqref{eq:distvp}, at time $t$, each node $n_i$ updates its local parameters $\ccalB_i$ by the recursion
\begin{equation}
\begin{aligned}\label{eq:disgd}
\ccalB_{i,t+1} &= \frac{1}{|\ccalN_i|+1} \Big( \sum_{j:n_j \in \ccalN_i} \ccalB_{j,t} + \ccalB_{i,t}\Big) \\
    & \qquad - \gamma_t \nabla_\ccalB J_{i,t}\big(\bbPsi(\bbX_t; \bbS_t, \ccalA^{\dag}, \ccalB_{i,t})\big)
\end{aligned}
\end{equation}
with $|\ccalN_i|$ the number of neighbors of node $n_i$. Put simply, each node $n_i$ descends its parameters $\ccalB_{i,t}$ along the local gradient $\nabla_\ccalB J_{i,t}$ $\big(\bbPsi(\bbX_t; \bbS_t, \ccalA^{\dag}, \ccalB_{i,t})\big)$ to approach the optimal parameters of \eqref{eq:distvp}, while performing the average over the 1-hop neighborhood in the meantime to drive local parameters to the consensus. This online algorithm is decentralized that can be carried out locally at each node by only communicating with its neighbors.

In a nutshell, the proposed online learning procedure is of low complexity due to the linearity and guarantees efficient convergence due to the convexity as proved next. Furthermore, it can be implemented in a distributed manner requiring only neighborhood information.


\section{Convergence} \label{PA}

We show the efficiency of proposed online learning procedure by analyzing its convergence property. To be more precise, we establish that it converges to the optimizer set of a time-varying problem, up to an error neighborhood that depends on problem variations. Before claiming the main result, we need following standard assumptions.

\begin{assumption} \label{asm:1}
Let $J_t\big(\bbPsi(\bbX_t; \bbS_t, \ccalA, \ccalB)\big)$ be the time-varying loss function with fixed parameters $\ccalA$, and $\ccalB^*_t$ be an optimal solution of $J_t\big(\bbPsi(\bbX_t; \bbS_t, \ccalA, \ccalB)\big)$ at time $t$. There exist a sequence $\{ \ccalB_t^* \}_t$ and a constant $C_B$ such that for all $t$, it holds that
 \begin{equation}\label{eq:H}
\| \ccalB_{t+1}^* - \ccalB_t^* \| \le C_B.
\end{equation}
\end{assumption}

\begin{assumption} \label{asm:2}
Let $J_t\big(\bbPsi(\bbX_t; \bbS_t, \ccalA, \ccalB)\big)$ be the time-varying loss function with fixed parameters $\ccalA$. If $\bbPsi(\bbX_t; \bbS_t, \ccalA, \ccalB)$ is a linear function of $\ccalB$, $J_t\big(\bbPsi(\bbX_t; \bbS_t, \ccalA, \ccalB)\big)$ is differentiable, strongly smooth with constant $C_{t,s}$ and strongly convex with constant $C_{t,\ell}$.
\end{assumption}

Assumption \ref{asm:1} establishes the correlation between time-varying problems and bounds time variations of changing optimal solutions. Assumption \ref{asm:2} is commonly used in optimization theory and satisfied in practice \cite{boyd2004convex}. Both assumptions are mild, based on which we present the convergence result.

\begin{theorem} \label{thm:convergence}
Consider the WD-GNN \eqref{eqn:WDGNN} optimized with the propose online learning procedure in \eqref{eq:onlinelearning}. Let $J_t\big(\bbPsi(\bbX_t; \bbS_t, \ccalA, \ccalB)\big)$ be the time-varying loss function satisfying Assumptions \ref{asm:1}-\ref{asm:2} with constants $C_B$, $C_{t,s}$ and $C_{t,\ell}$. Let also $\ccalB^*_t$ be the optimal solution of $J_t\big(\bbPsi(\bbX_t; \bbS_t, \ccalA, \ccalB)\big)$ and $\gamma_t = \gamma$ be the step-size of gradient descent. Then, the sequence $\{ \ccalB_t \}_t$ satisfies
 \begin{equation}\label{eq:H}
\| \ccalB_t - \ccalB_t^* \| \le \big( \prod_{\tau=1}^t m_{\tau} \big) \| \ccalB_0 - \ccalB_0^* \| + \frac{1-\hat{m}^{t}}{1-\hat{m}} C_B
\end{equation}
where $m_t=\max\{ |1-\gamma C_{t,s}|, |1-\gamma C_{t,\ell}| \}$ is the convergence rate and $\hat{m}= \max_{1 \le \tau \le t} m_\tau$.
\end{theorem}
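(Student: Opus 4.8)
The plan is to prove the stated bound by deriving a one-step ``contraction plus drift'' recursion for the tracking error $e_t := \| \ccalB_t - \ccalB_t^* \|$ and then unrolling it as a (finite) geometric series. The two ingredients correspond exactly to the two assumptions: Assumption~\ref{asm:1} controls how far the target $\ccalB_t^*$ can move between consecutive time steps, while Assumption~\ref{asm:2} --- together with the fact that fixing $\ccalA=\ccalA^\dag$ makes $\bbPsi(\bbX_t;\bbS_t,\ccalA^\dag,\ccalB)$ an (affine) function of $\ccalB$ --- controls how much one gradient step of \eqref{eq:onlinelearning} contracts the distance to the \emph{current} minimizer. Note this is the statement for the centralized update \eqref{eq:onlinelearning}; the distributed variant would additionally require a consensus argument.

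First I would split the post-update error with the triangle inequality,
\begin{equation*}
\| \ccalB_{t+1} - \ccalB_{t+1}^* \| \;\le\; \| \ccalB_{t+1} - \ccalB_t^* \| \;+\; \| \ccalB_t^* - \ccalB_{t+1}^* \| ,
\end{equation*}
and bound the ``drift'' term $\| \ccalB_t^* - \ccalB_{t+1}^* \|\le C_B$ directly by Assumption~\ref{asm:1}. For the first term, write $g_t(\ccalB):=J_t\big(\bbPsi(\bbX_t;\bbS_t,\ccalA^\dag,\ccalB)\big)$; since $\ccalB_t^*$ minimizes the convex $g_t$ we have $\nabla_\ccalB g_t(\ccalB_t^*)=\bbzero$, so \eqref{eq:onlinelearning} gives
\begin{equation*}
\ccalB_{t+1} - \ccalB_t^* \;=\; (\ccalB_t - \ccalB_t^*) \;-\; \gamma\big( \nabla_\ccalB g_t(\ccalB_t) - \nabla_\ccalB g_t(\ccalB_t^*) \big).
\end{equation*}
Because $\bbPsi$ is linear in $\ccalB$ here, Assumption~\ref{asm:2} applies to $g_t$: it is $C_{t,\ell}$-strongly convex and $C_{t,s}$-strongly smooth. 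Writing $\nabla_\ccalB g_t(\ccalB_t) - \nabla_\ccalB g_t(\ccalB_t^*) = \bbH_t(\ccalB_t-\ccalB_t^*)$ with $\bbH_t$ the averaged Hessian along the segment joining $\ccalB_t^*$ and $\ccalB_t$ (or, without twice-differentiability, invoking the standard gradient-descent contraction lemma for strongly convex, strongly smooth functions), whose spectrum lies in $[C_{t,\ell},C_{t,s}]$, yields $\ccalB_{t+1}-\ccalB_t^* = (\bbI-\gamma\bbH_t)(\ccalB_t-\ccalB_t^*)$ and hence $\| \ccalB_{t+1}-\ccalB_t^*\| \le \|\bbI-\gamma\bbH_t\|\,\|\ccalB_t-\ccalB_t^*\| \le m_t\,\|\ccalB_t-\ccalB_t^*\|$ with $m_t=\max\{|1-\gamma C_{t,s}|,|1-\gamma C_{t,\ell}|\}$. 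Combining the two pieces gives the recursion $e_{t+1}\le m_t e_t + C_B$.

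Next I would unroll: iterating $e_{t+1}\le m_t e_t + C_B$ from the initialization $\ccalB_0=\ccalB^\dag$ gives $e_t \le \big(\prod_\tau m_\tau\big)e_0 + C_B\sum_{j=0}^{t-1}\big(\text{a trailing partial product of the }m_\tau\text{'s}\big)$; bounding each partial product by the corresponding power of $\hat m = \max_\tau m_\tau$ and summing the resulting geometric series $\sum_{j=0}^{t-1}\hat m^{\,j} = \frac{1-\hat m^{t}}{1-\hat m}$ produces exactly the claimed bound \eqref{eq:H} (after aligning the index convention of the statement). Observe that the formula $\tfrac{1-\hat m^{t}}{1-\hat m}$ is valid as a finite sum regardless of whether $\hat m<1$, though a bounded steady-state error neighborhood is obtained only when $\gamma$ is small enough that $\hat m<1$.

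The main obstacle is obtaining the contraction factor in the sharp form $m_t=\max\{|1-\gamma C_{t,s}|,|1-\gamma C_{t,\ell}|\}$ rather than a looser rate: this is precisely where the \emph{linearity} of the wide part in $\ccalB$ is indispensable, since it is what lets Assumption~\ref{asm:2} transfer the convexity/smoothness constants of $J_t$ to $g_t$ and, through the (averaged) Hessian, bound $\|\bbI-\gamma\bbH_t\|$ by $\max_{\lambda\in[C_{t,\ell},C_{t,s}]}|1-\gamma\lambda|$. A secondary point requiring care is that $C_{t,s}$, $C_{t,\ell}$ --- hence the $m_t$ --- are time-varying, so the unrolled product cannot collapse to a single power; one must keep the exact product $\prod_\tau m_\tau$ multiplying $\|\ccalB_0-\ccalB_0^*\|$ and upper-bound the $m_\tau$'s by $\hat m$ only inside the drift-accumulation sum, which is exactly the asymmetric form of \eqref{eq:H}.
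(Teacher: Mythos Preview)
Your proposal is correct and follows the standard ``contraction plus drift'' argument for tracking a time-varying minimizer with gradient descent. The paper's own proof is much terser: it merely observes that fixing $\ccalA=\ccalA^\dag$ makes $\bbPsi$ affine in $\ccalB$, invokes Assumption~\ref{asm:2} to conclude that each $J_t\circ\bbPsi$ is $C_{t,\ell}$-strongly convex and $C_{t,s}$-smooth, and then defers the entire quantitative bound to Corollary~7.1 of an external reference on time-varying convex optimization. In other words, you have reconstructed precisely the argument that lives inside that citation --- the triangle-inequality split, the one-step contraction $\|\ccalB_{t+1}-\ccalB_t^*\|\le m_t\|\ccalB_t-\ccalB_t^*\|$ via the spectrum of $\bbI-\gamma\bbH_t$, the drift bound $C_B$, and the geometric unrolling --- so your approach and the paper's are the same at the level of ideas, with yours being self-contained rather than delegated. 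One small caution: the sharp rate $m_t=\max\{|1-\gamma C_{t,s}|,|1-\gamma C_{t,\ell}|\}$ via the averaged-Hessian identity needs twice differentiability, whereas Assumption~\ref{asm:2} only posits differentiability; the paper sidesteps this by citing, and your parenthetical acknowledges it, but if you write it out you should either add a $C^2$ hypothesis or accept the (slightly different) rate that the first-order contraction lemma yields.
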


\begin{proof}
See Appendix \ref{Appendix:proofOfTheorem1}.
\end{proof}

Theorem \ref{thm:convergence} shows the online learning of the WD-GNN converges to the optimal solution of the time-varying problem up to a limiting error neighborhood. The latter depends on time variations of the optimization problem. If we particularize $C_B=0$ with $m_t = m$ for all $t$, we re-obtain the same result for the time-invariant optimization problem, that is the exact convergence of gradient descent.


\section{Experiments} \label{E}
\label{experiments}

\begin{figure*}[h]
    \centering
    \begin{subfigure}{0.66\columnwidth}
        \includegraphics[width=1.0\linewidth, height = 0.75\linewidth]{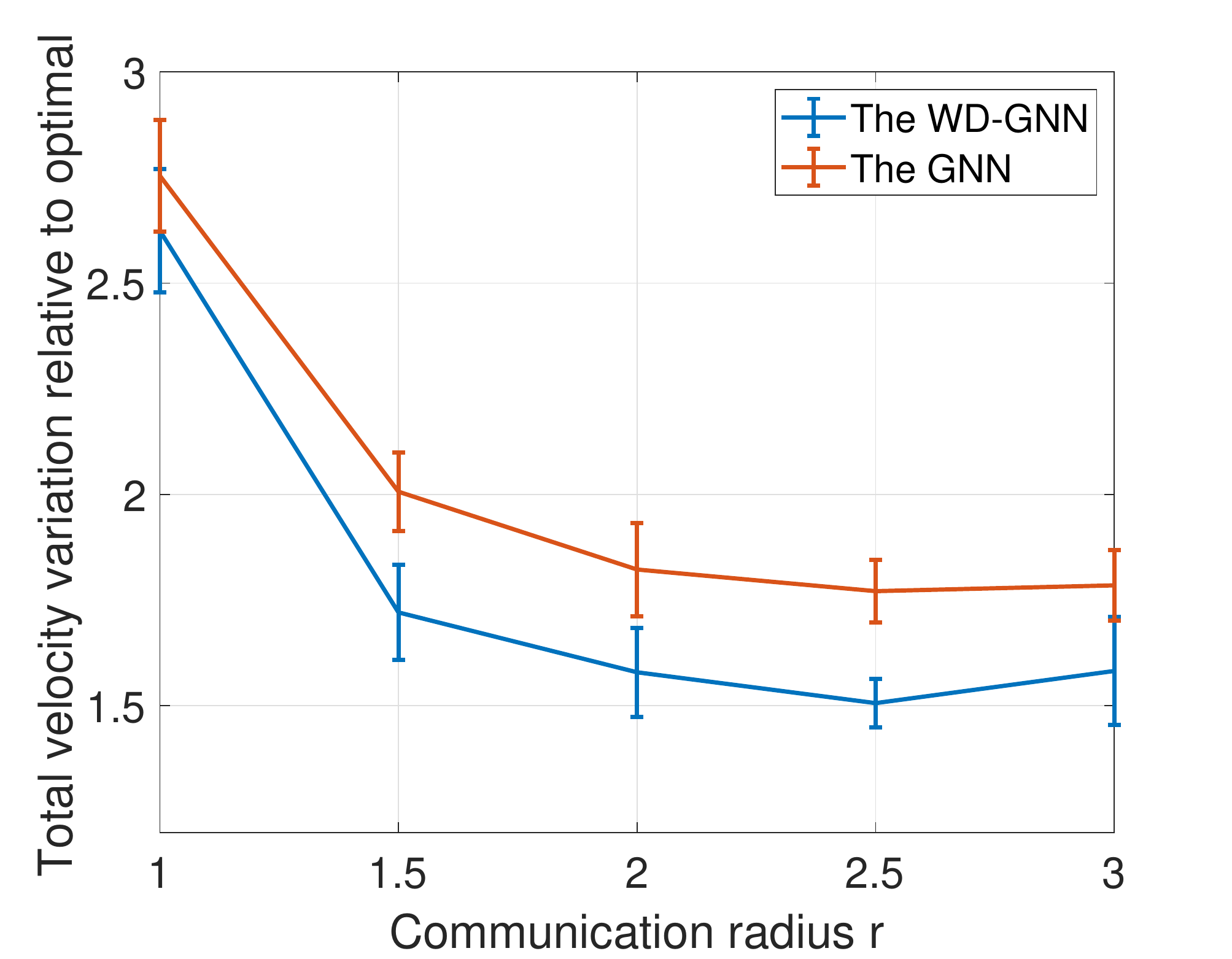}%
        \caption{}%
        \label{fig1}%
    \end{subfigure}\hfill\hfill%
    \begin{subfigure}{0.66\columnwidth}
        \includegraphics[width=1.0\linewidth,height = 0.75\linewidth]{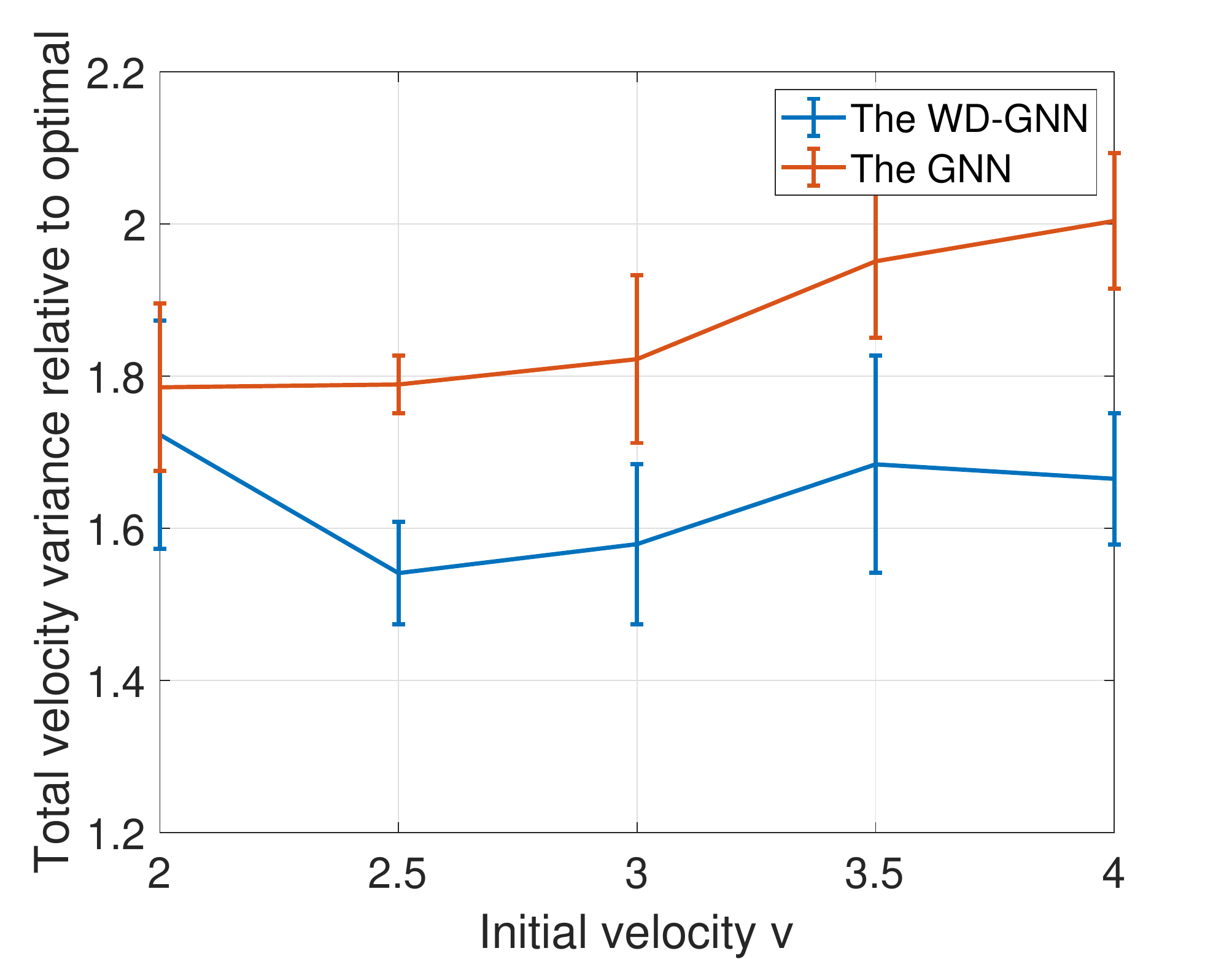}%
        \caption{}%
        \label{fig2}%
    \end{subfigure}\hfill\hfill%
    \begin{subfigure}{0.66\columnwidth}
        \includegraphics[width=1.0\linewidth,height = 0.75\linewidth]{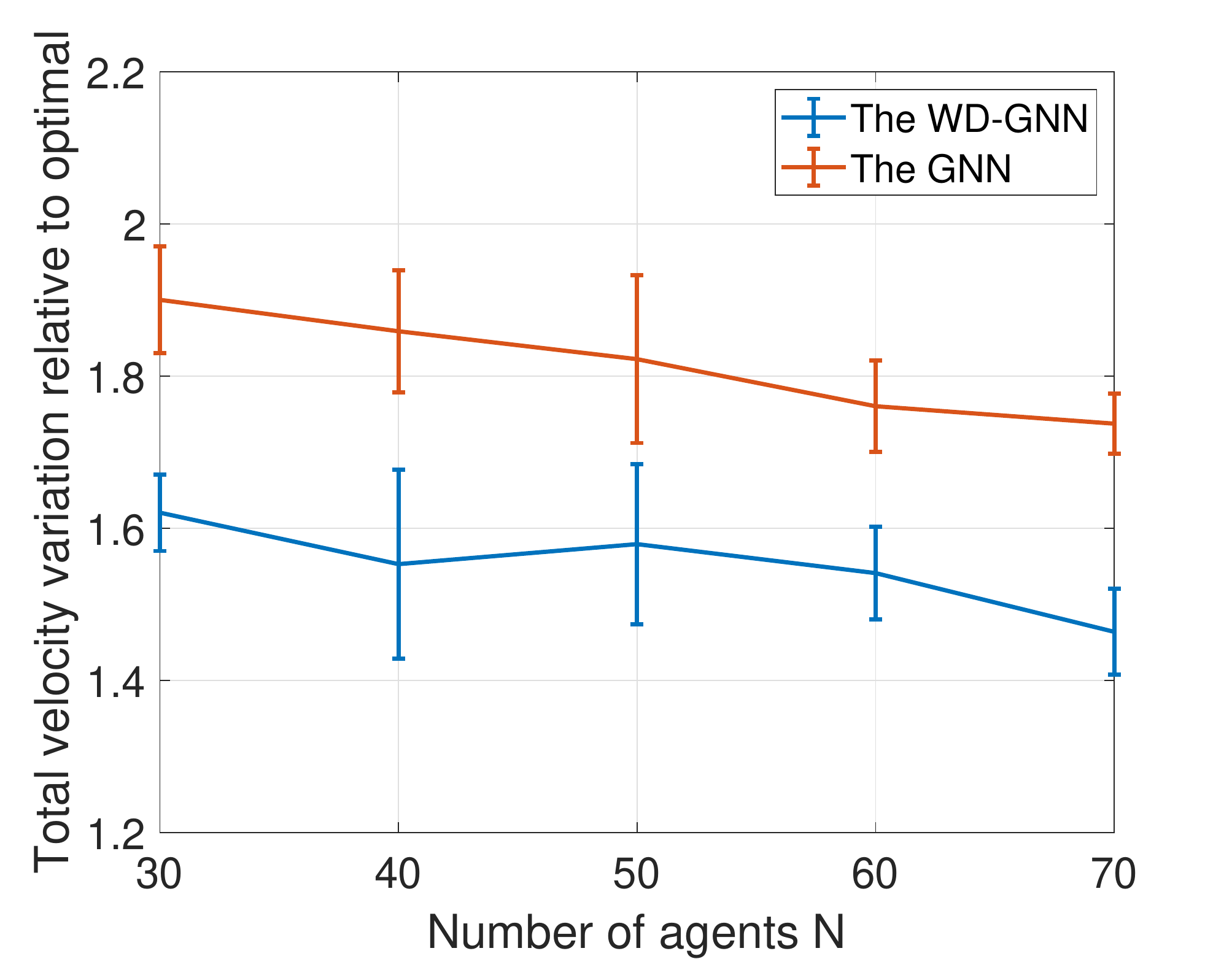}%
        \caption{}%
        \label{fig3}%
    \end{subfigure}%
    \caption{Total velocity variation relative to the optimal controller for the WD-GNN and the GNN. (a) Comparison under different communication radius. (b) Comparison under different initial velocities. (c) Comparison under different numbers of agents. }\label{fig:vary_n1}
\end{figure*}

\renewcommand\arraystretch{1.15}
\begin{table*}[ht]
    \begin{center}
        \caption{Average (std. deviation) of total and final velocity variations.}
        \label{table1}
        \begin{tabular}{|l|c|c|}
            \hline
            Architecture$/$Measurement & Total velocity variation & Final velocity variation \\ \hline
            Optimal controller & $52 (\pm 2)$ & $0.0035 (\pm 0.0001)$ \\ \hline
            WD-GNN &  $84 (\pm 5)$ & $0.0119 (\pm 0.0032)$ \\  \hline
            WD-GNN (centralized online learning) & $79(\pm 4)$ & $0.0065 (\pm 0.0028)$ \\  \hline
            WD-GNN (decentralized online learning) & $82(\pm 4)$ & $0.0069 (\pm 0.0023)$ \\ \hline
            GNN & $95(\pm 6)$ & $0.0153 (\pm 0.0030)$ \\ \hline
            Graph filter & $428 (\pm 105)$ & $ 1.8 (\pm 1.1)$ \\
            \hline
        \end{tabular}
    \end{center}
\end{table*}

The goal of the experiment is to learn a decentralized controller that coordinates robots to move together at the same velocity while avoiding collisions. Consider a network of $N$ robots initially moving at random velocities sampled in the interval $[-v, v]^2$. At time $t$, each robot $n_i$ is described by its position $\bbp_{i,t} \in \mathbb{R}^2$, velocity $\bbv_{i,t} \in \mathbb{R}^2$ and acceleration $\bbu_{i,t} \in \mathbb{R}^2$, the latter being the controllable variable. This problem has an optimal centralized controller $\bbu_{i,t}^{\ast}$ that can be readily computed \cite{tolstaya2019learning, Witsenhausen68-Counterexample}. Such a solution requires the knowledge of positions and velocities of all robots and thus demands a centralized computation unit. In the decentralized setting, we assume robot $n_i$ can only communicate with robot $n_j$ if they are within the communication radius $r$, i.e., $\| \bbp_{i,t}-\bbp_{j,t} \| \le r$. We establish the communication graph $\ccalG_t$ with the node set $\ccalV$ for robots and the edge set $\ccalE_t$ for available communication links, and the support matrix $\bbS_t$ is the associated adjacency matrix.

We use imitation learning \cite{Ross10-ImitationLearning} to train the WD-GNN for a decentralized controller on accelerations $\bbU_t=[\bbu_{1,t}, \ldots, \bbu_{N,t}]^\top=\bbPsi(\bbX_t; \bbS_t, \ccalA, \ccalB)\in \mathbb{R}^{N \times 2}$, where $\bbX_t$ is the graph signal that collects positions and velocities of neighbors at each robot \cite{tolstaya2019learning}. At testing time, we measure performance as the velocity variation among robots over the whole trajectory and also at the final time instant \cite{tolstaya2019learning}. In a way, the total velocity variation includes how long it takes for the robot swarm to be coordinated, while the final velocity variation tells how well the task was finally accomplished. For the WD-GNN, we consider the wide component as a graph filter and the deep component as a single layer GNN, where both have $G = 32$ output features. All filters are of order $K=3$ and the nonlinearity is the Tanh. The dataset contains $400$ trajectories for training, $40$ for validation and $40$ for testing. The ADAM optimizer is used with a learning rate $\gamma = 5 \cdot 10^{-4}$ and decaying factors $0.9$ and $0.999$. Our results are averaged for $5$ dataset realizations\footnote{See Appendix \ref{Appendix:implementationDetails} for comprehensive implementation details}.

We first compare the optimal controller, the GNN and the graph filter with the WD-GNN without online learning in the initial condition $r=2$m, $v=3$m/s and $N=50$ (Table \ref{table1}). We see that the WD-GNN exhibits the best performance in both performance measures. We attribute this behavior to the increased representation power of the WD-GNN as a combined architecture. The GNN takes the second place, while the graph filter performs much worse than the other two architectures. This is because the optimal distributed controller is known to be nonlinear \cite{Witsenhausen68-Counterexample}. 

To account for the adaptability to different initial conditions of the proposed model in comparison to the GNN, we further run simulations for changing communication radius $r$ (Fig. \ref{fig1}), changing initial velocities $v$ (Fig. \ref{fig2}) and changing numbers of agents $N$ (Fig. \ref{fig3}). We display results as the total velocity variation relative to the optimal controller. In general, these experiments show an improved robustness of the WD-GNN. Fig. \ref{fig1} shows that the performance increases as the communication radius $r$ increases, which is expected since the robots now have access to farther information. In Fig. \ref{fig2}, we observe that the flocking problem becomes harder with the increase of initial velocity $v$, since it is reasonably harder to control robots that move very fast in random directions. As for the number of agents $N$, Fig. \ref{fig3} demonstrates that the relative total velocity variation decreases when the number of agents $N$ increases, which is explained by the fact that the optimization problem becomes easier to solve with increased node exchanges in larger graphs.

Finally, we test the performance improvement on the WD-GNN with online learning. We consider the centralized online learning algorithm as the baseline, with the velocity variation among all robots as the instantaneous loss function in \eqref{eq:timevaryingp}. We test the proposed distributed online learning algorithm \eqref{eq:disgd}, where the instantaneous loss function is the velocity variance among neighboring robots. Results are shown in Table \ref{table1}. We see that the total and final velocity variations get reduced for both centralized and decentralized online algorithms, indicating that the WD-GNN is successfully adapting to the changing communication network. The improvement in the final velocity variation is more noticeable, since the effects of single time-updates get compounded as the trajectory proceeds.


\section{Conclusion}\label{C}

We put forward the Wide and Deep Graph Neural Network architecture and considered a distributed online learning scenario. The proposed architecture consists of a bank of filters (wide part) and a GNN (deep part), leading to a local and distributed architecture. To address time-varying problem scenarios without compromising the distributed nature of the architecture, we proposed a distributed online learning algorithm. By fixing the deep part, and retraining online only the wide part, we manage to obtain a convex optimization problem, and thus proved convergence guarantees of the online learning procedure. Numerical experiments are performed on learning decentralized controllers for flocking a robot swarm, showing the success of WD-GNNs in adapting to time-varying scenarios. Future research involves the online retraining of the deep part, as well as obtaining convergence guarantees for the distributed online learning algorithm.


\newpage
\clearpage
\appendix

\section{Proof of Theorem 1} \label{Appendix:proofOfTheorem1}

\begin{proof}

Let $\ccalA^{\dag}$ and $\ccalB^{\dag}$ be the parameters learned from the offline training phase. The proposed online learning procedure fixes the deep part, i.e., it freezes the parameters $\ccalA = \ccalA^{\dag}$, and retrains the wide part online. The model $\bbPsi(\bbX; \bbS, \ccalA, \ccalB)$ can then be represented as a function of the wide part parameters $\ccalB$ only
 \begin{equation}\label{proof:theorem1eq1}
 \begin{aligned}
\bbPsi(\bbX; \bbS, \ccalA^{\dag}, \ccalB) &= \alpha_{\text{D}} \bbPhi(\bbX; \bbS, \ccalA^{\dag}) + \alpha_{\text{W}} \bbB(\bbX; \bbS, \ccalB) + \beta\\
& = \hat{\bbPsi}(\bbX; \bbS, \ccalB).
\end{aligned}
\end{equation}
Given the graph signal $\bbX$ and the graph matrix $\bbS$, $\hat{\bbPsi}(\bbX; \bbS, \ccalB)$ is a linear function of $\ccalB$ since both the graph filter $\bbB(\bbX; \bbS, \ccalB)$ and the combination way of two components are linear.

At testing time $t$, the sampled optimization problem \eqref{eq:timevaryingp} is translated to
 \begin{equation}\label{eq:timevaryingp1}
\min_{\ccalB} J_t\big(\hat{\bbPsi}(\bbX_t; \bbS_t, \ccalB)\big)
\end{equation}
where $J_t(\cdot)$, $\bbX_t$ and $\bbS_t$ are given instantaneous loss function, observed signal and graph matrix at time $t$. Since $\hat{\bbPsi}(\bbX_t; \bbS_t, \ccalB)$ is a linear function of $\ccalB$, $J_t\big(\hat{\bbPsi}(\bbX_t; \bbS_t, \ccalB)\big)$ is differentiable, strongly smooth with constant $C_{t,s}$ and strongly convex with constant $C_{t,\ell}$ based on Assumption 2. This implies that the problem \eqref{eq:timevaryingp1} is a time-varying convex optimization problem. We can then prove the theorem by following the proof of Corollary 7.1 in \cite{simonetto2017time}.
\end{proof}

\section{Implementation details for robot swarm control in flocking} \label{Appendix:implementationDetails}

We consider a network with $N$ robots initially moving at random velocities. At time $t$, each robot $n_i$ is described by its position $\bbp_{i,t} \in \mathbb{R}^2$, velocity $\bbv_{i,t} \in \mathbb{R}^2$, and controls its acceleration $\bbu_{i,t} \in \mathbb{R}^2$ for the next state
\begin{equation}\label{eq:controlProcess}
\bbp_{i,t+1} = \bbp_{i,t} + \bbv_{i,t} T_s + \frac{1}{2}\bbu_{i,t}T_s^2, ~ \bbv_{i,t+1} = \bbv_{i,t}+\bbu_{i,t}T_s
\end{equation}
where $T_s$ is the sampling time and $\bbu_{i,t}$ is held constant during the sampling time interval $[T_s t,T_s (t+1)]$. Our purpose is to control accelerations $\bbU_t=[\bbu_{i,t}, \ldots, \bbu_{N,t}]^\top \in \mathbb{R}^{N \times 2}$ such that robots will ultimately move at the same velocity without collision. There is an optimal solution for accelerations \cite{Witsenhausen68-Counterexample}
\begin{equation}\label{eq:optimalController}
\bbu_{i,t}^* = - \sum_{j=1}^N \left(\bbv_{i,t} - \bbv_{j,t} \right) - \sum_{j=1}^N \nabla_{\bbp_{i,t}} V\left( \bbp_{i,t}, \bbp_{j,t} \right)
\end{equation}
for all $i=1,\ldots,N$ with
\begin{align} \label{eq:avoidancePotential}
    &V( \bbp_{i,t}, \bbp_{j,t}) \\
    &=\begin{cases}
        \frac{1}{\| \bbp_{i,t}-\bbp_{j,t} \|^2} \!-\! \log \left( \| \bbp_{i,t}\!-\!\bbp_{j,t} \|^2 \right) ,& \text{if}~ \| \bbp_{i,t}-\bbp_{j,t} \| \le \rho\\\\
        \frac{1}{\rho^2} - \log \left( \rho^2 \right), & \text{otherwise}
    \end{cases}\nonumber
    \end{align}
the collision avoidance potential. The computation of $\bbu_{i,t}^*$ requires instantaneous positions $\{ \bbp_{i,t} \}_{i=1}^N$ and velocities $\{ \bbv_{i,t} \}_{i=1}^N$ of all robots over network. As such, it is centralized and cannot be implemented in practice, where each robot only has access to local neighborhood information.

In the decentralized setting, robot $n_i$ can communicate with robot $n_j$ if and only if they are within the communication radius $r$, i.e., there is a communivation link $(n_i,n_j)$ if $\| \bbp_{i,t}-\bbp_{j,t} \| \le r$. We establish the communication graph $\ccalG_t$ with the node set $\ccalV = \{ n_1,\ldots,n_N \}$ and the edge set $\ccalE_t$ containing available communication links. The graph matrix $\bbS_t$ is the adjacency matrix with entry $[\bbS_t]_{ij} =1$ if $(n_i,n_j) \in \ccalE_t$ and $[\bbS_t]_{ij} =0$ otherwise. Additionally, we assume robot communications occur within the sampling time interval, such that robot action clock and communication clock coincide.

We use the WD-GNN to learn a decentralized controller $ \bbU_t = \bbPsi(\bbX_t; \bbS_t, \ccalA, \ccalB)$ where the graph matrix $\bbS_t$ is the adjacency matrix of the communication graph $\ccalG_t$ and the graph signal $\bbX_t=[ \bbx_{1,t}, \ldots, \bbx_{N,t}]^\top \in \mathbb{R}^{N \times 6}$ is
\begin{equation}\label{eq:relevantFeature}
\begin{aligned}
\bbx_{i,t} \!=\!\! &\left[\sum_{j:n_j\in \ccalN_{i,t}}\!\!\!\!\!\!\big(\bbv_{i,t} \!-\! \bbv_{j,t} \big), \sum_{j:n_j\in \ccalN_{i,t}}\frac{\bbp_{i,t}-\bbp_{j,t}}{\| \bbp_{i,t}-\bbp_{j,t} \|^4}, \right.\\
& \left. \quad \quad \quad \quad \quad \quad \quad \quad \quad  \sum_{j:n_j\in \ccalN_{i,t}}\frac{\bbp_{i,t}-\bbp_{j,t}}{\| \bbp_{i,t}-\bbp_{j,t} \|^2}\right]
\end{aligned}
\end{equation}
for all $i=1,\ldots,N$, which is a local feature vector collecting position and velocity information of neighboring robots. Graph filters here are adapted to the delayed information structure as
\begin{equation} \label{eqn:delayedGraphFilter}
    \bbB(\bbX_t; \bbS_t, \ccalB) = \sum_{k = 0}^{K} \bbS_t \bbS_{t-1} \cdots \bbS_{t-k} \bbX_{t-k} \bbB_{k}.
\end{equation}
We leverage the imitation learning \cite{Ross10-ImitationLearning} to parametrize the optimal controller \eqref{eq:optimalController} with the WD-GNN.

$\textbf{Dataset.}$ The dataset contains $400$ trajectories for training, $40$ for validation and $40$ for testing. We generate each trajectory by initially positioning $N=50$ robots randomly in a circle. The minimal initial distance between two robots is $0.1$m and initial velocities are sampled randomly from $[-v, +v]^2$ with $v=3$m/s by default. The duration of trajectories is $T=2$s with the sampling time $T_s=0.01$s; the maximal acceleration is $\pm 10 m/{s^2}$ and the communication radius is $r=2$m.

$\textbf{Parametrizations.}$ For the WD-GNN, we consider the wide component as a graph filter and the deep component as a single layer GNN, where both have $G=32$ output features. All filters are of order $K=3$ and the nonlinearity is the Tanh. The output features are fed into a local readout layer to generate two-dimensional acceleration $\bbu_{i,t}$ at each node. In the offline training phase, the combination parameters $\alpha_D$m $\alpha_W$ and $\beta$ are considered as architecture parameters and are trained jointly. We train the WD-GNN for $30$ epochs with batch size of $20$ trajectories. The ADAM optimizer is used with decaying factors $\beta_1=0.9$, $\beta_2 = 0.999$ and learning rate $\gamma = 5 \cdot 10^{-4}$. We average experiment results for $5$ dataset realizations.

$\textbf{Performance measure.}$ The flocking condition can be quantified by the variance of robot velocities over network, referred as the \emph{velocity variation}. At testing time, we measure the performance of learned controller from two aspects: the total velocity variation over the whole trajectory
\begin{align}\label{eq:totalVariance}
&\ccalJ = \frac{1}{N} \sum_{t=1}^D \sum_{i=1}^N \left\| \bbv_{i,t} - \frac{1}{N} \sum_{i=1}^N \bbv_{i,t} \right\|^2\!
\end{align}
and
\begin{align}\label{eq:finalVariance}
& J(D) = \frac{1}{N} \sum_{i=1}^N \left\| \bbv_{i,D} - \frac{1}{N} \sum_{i=1}^N \bbv_{i,D} \right\|^2
\end{align}
where $D=T/T_s$ is the total number of time instances. The former reflects the whole controlling process which decreases if robots approach the consensus more quickly, while the latter indicates the final flocking condition of robots.

Main results are shown in the full paper. To further help understand and visualize this experiment, we show video snapshots of the robot swarm flocking process using the learned WD-GNN decentralized controller in Fig. \ref{fig.video}. We see that robots move at random velocities initially in Fig. \ref{subfig1}, tend to move together in \ref{subfig2}, and are well coordinated in Fig. \ref{subfig3}. The cost shown in the figure is the instantaneous velocity variation of robots over network.

\begin{figure*}[t]
\begin{subfigure}{0.66\columnwidth}
\includegraphics [width=1.1\linewidth, height = 0.85\linewidth]
                 {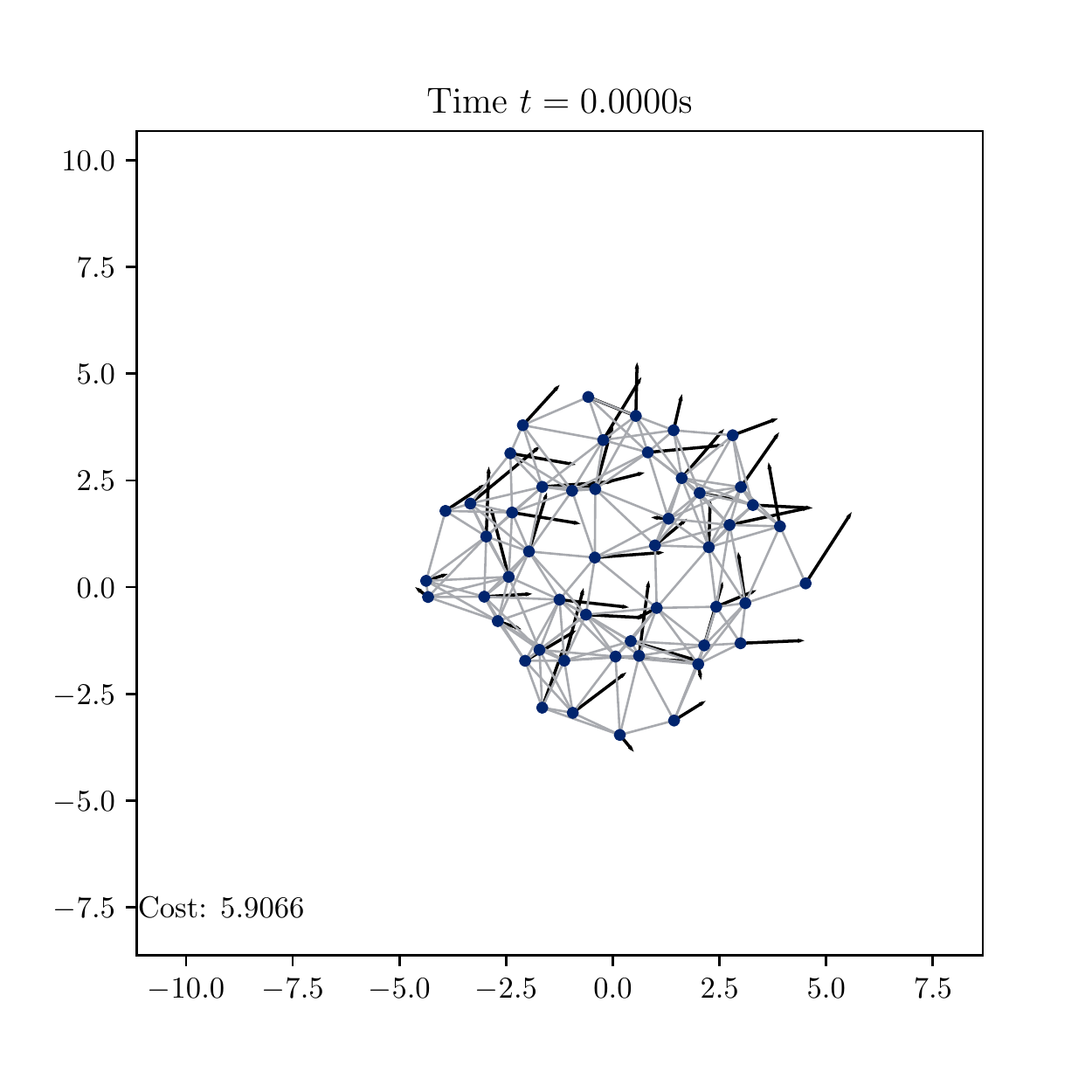}\qquad
\caption{Time $t=0.00$s}%
\label{subfig1}%
\end{subfigure}\hfill\hfill%
\begin{subfigure}{0.66\columnwidth}
\includegraphics [width=1.1\linewidth, height = 0.85\linewidth]
                 {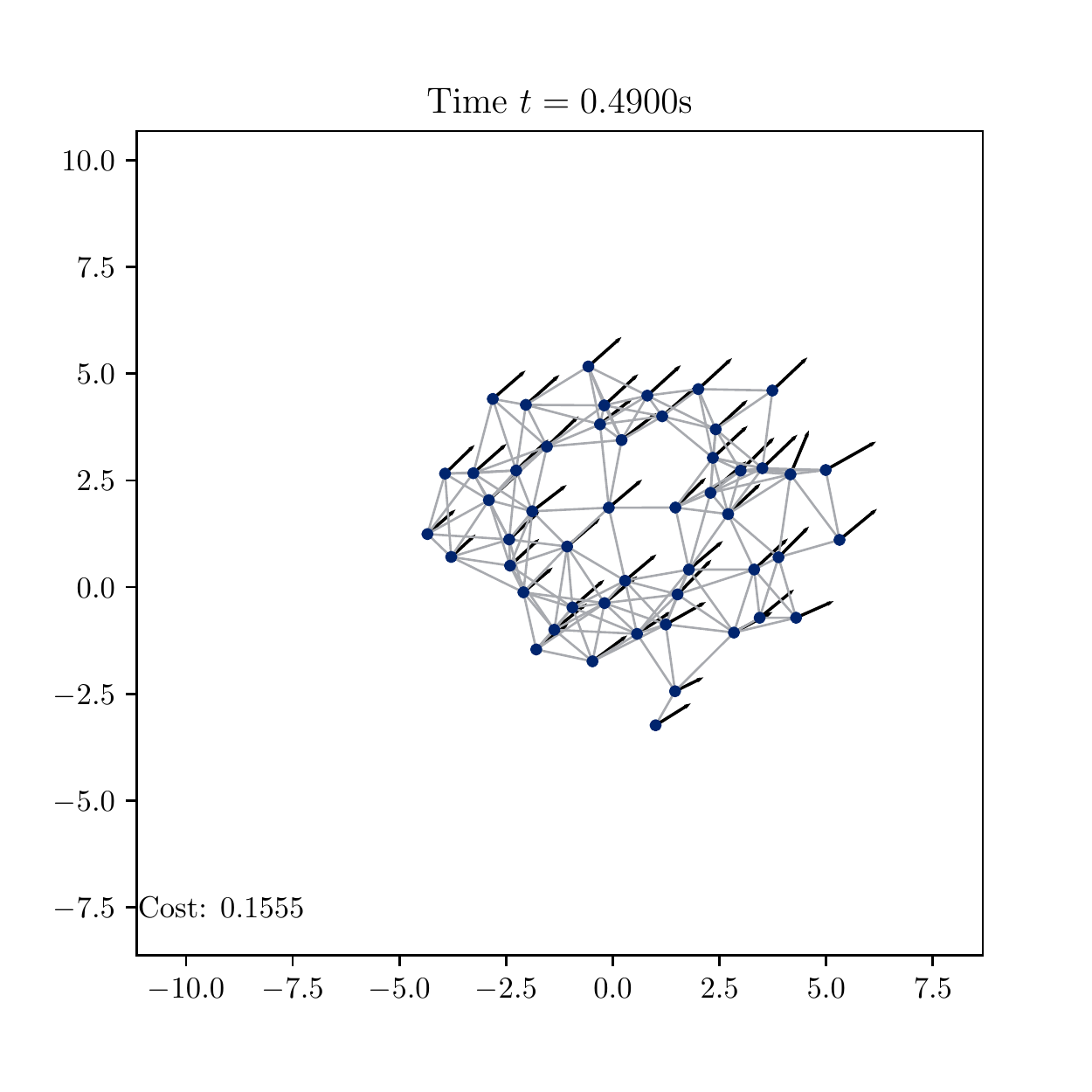}\qquad
\caption{Time $t=0.49$s}%
\label{subfig2}%
\end{subfigure}\hfill\hfill%
\begin{subfigure}{0.66\columnwidth}
\includegraphics [width=1.1\linewidth, height = 0.85\linewidth]
                 {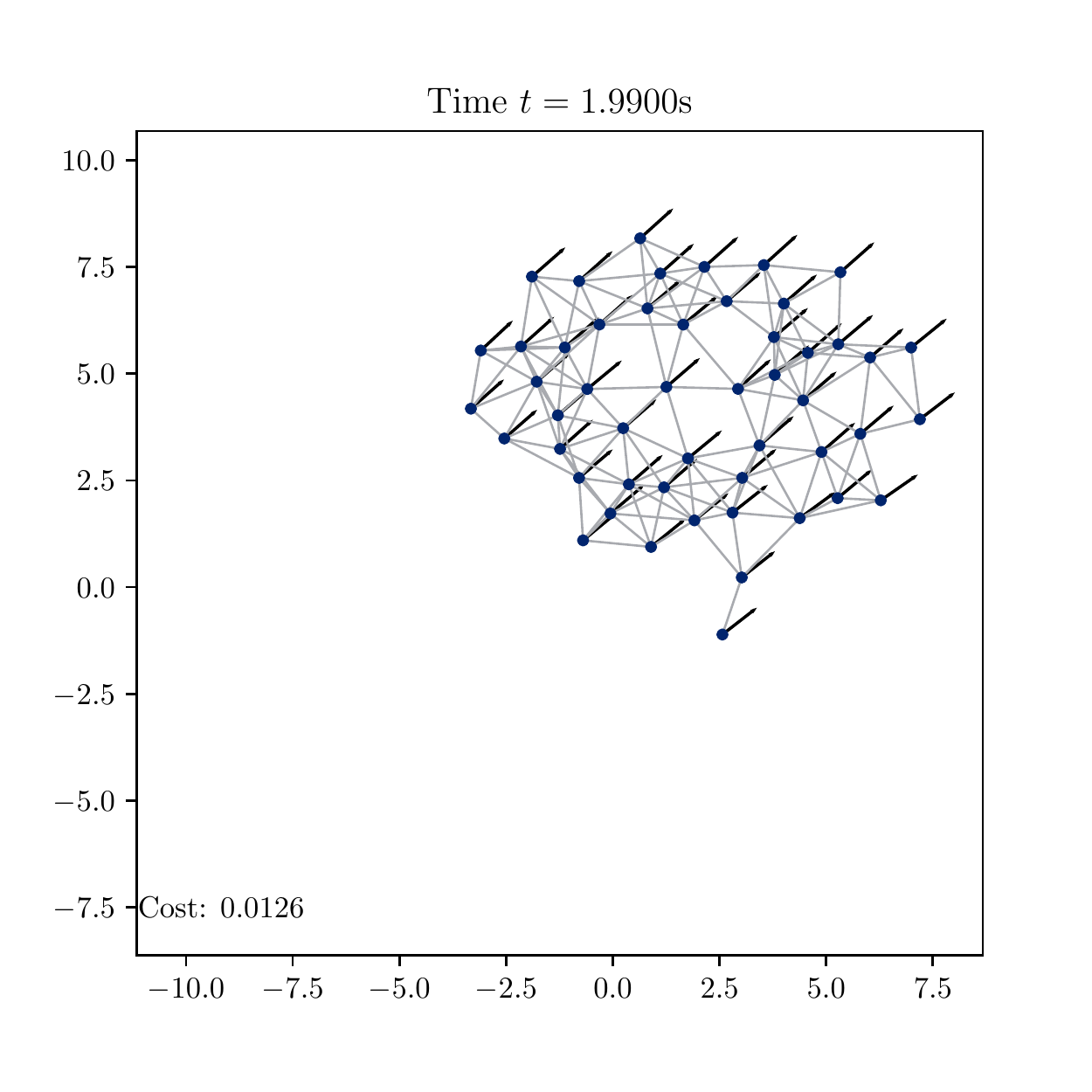}\qquad
\caption{Time $t=1.99$s}%
\label{subfig3}%
\end{subfigure}
\caption{Video snapshots of the robot swarm flocking process with the learned WD-GNN controller.}
\label{fig.video}
\end{figure*}

\begin{figure*}[t]
\begin{subfigure}{0.45\columnwidth}
\includegraphics [width=1.1\linewidth, height = 0.85\linewidth]
                 {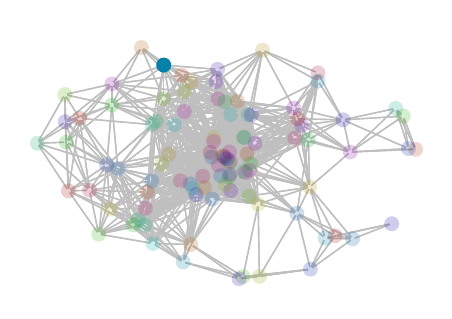}\qquad
\caption{Underlying graph}%
\label{subfiga_vary_p}%
\end{subfigure}\hfill\hfill%
\begin{subfigure}{0.45\columnwidth}
\includegraphics [width=1.1\linewidth, height = 0.85\linewidth]
                 {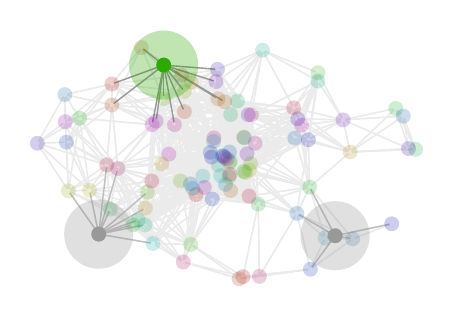}\qquad
\caption{$1$-hop neighborhood}%
\label{subfiga_vary_p}%
\end{subfigure}\hfill\hfill%
\begin{subfigure}{0.45\columnwidth}
\includegraphics [width=1.1\linewidth, height = 0.85\linewidth]
                 {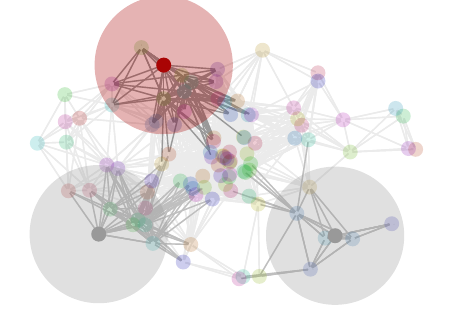}\qquad
\caption{$2$=hop neighborhood}%
\label{subfiga_vary_p}%
\end{subfigure}\hfill\hfill%
\begin{subfigure}{0.45\columnwidth}
\includegraphics [width=1.1\linewidth, height = 0.85\linewidth]
                 {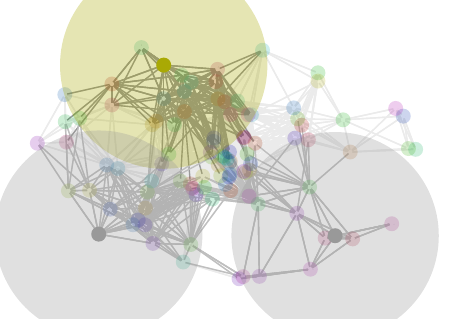} 
\caption{$3$-hop neighborhood}%
\label{subfiga_vary_p}%
\end{subfigure}\\ \bigskip
\def \thisplotscale {1.8}
\def \unit {\thisplotscale cm}

\tikzstyle {Phi} = [rectangle,
                    thin,
                    minimum width = 1.0*\unit,
                    minimum height = \sumshift*\unit,
                    anchor = west,
                    draw,
                    fill = blue!20]

\tikzstyle {sum} = [circle,
                    thin,
                    minimum width  = 0.3*\unit,
                    minimum height = 0.3*\unit,
                    anchor = center,
                    draw,
                    fill = blue!20]

\def \deltax {1.5}
\def \deltay {0.8}
\def \sumshift {0.4}

\def \thisplotscale {2.26}
\def \unit {\thisplotscale cm}

\tikzstyle {Phi} = [rectangle,
                    thin,
                    minimum width = 0.5*\unit,
                    minimum height = \sumshift*\unit,
                    anchor = west,
                    draw,
                    fill = blue!20]

\tikzstyle {sum} = [circle,
                    thin,
                    minimum width  = 0.3*\unit,
                    minimum height = 0.3*\unit,
                    anchor = center,
                    draw,
                    fill = blue!20]

\def \deltax {1.6}
\def \deltay {0.8}
\def \sumshift {0.4}

\begin{tikzpicture}[x = 1*\unit, y = 1*\unit]

\node (origin) [] {};
\path (origin) ++ (0.2*\deltax, 0) node (first) [] {};

\path (first) ++ (1.1*\deltax, 0) node (0) [Phi] {$\bbS$};
\path (0)     ++ (1.0*\deltax, 0) node (1) [Phi] {$\bbS$};
\path (1)     ++ (1.0*\deltax, 0) node (2) [Phi] {$\bbS$};

\path (2.east) ++ (0.7*\sumshift*\deltax, 0) node [anchor=west] (last) [] {};

\path (first.east) ++ (1.5*\sumshift*\deltax, -\deltay) node (sum0) [sum] {$+$};
\path (0.east) ++ (\sumshift*\deltax, -\deltay) node (sum1) [sum] {$+$};
\path (1.east) ++ (\sumshift*\deltax, -\deltay) node (sum2) [sum] {$+$};
\path (2.east) ++ (\sumshift*\deltax, -\deltay) node (sum3) [sum] {$+$};

\path[-stealth] (first) edge [very near start, above] node {$\bbX$}               (0);	
\path[-stealth] (0)     edge [above] node {$\ \bbS\bbX$}     (1);	
\path[-stealth] (1)     edge [above] node {$\ \bbS^{2}\bbX$} (2);	
\path[-]        (2)     edge [above] node {$\ \bbS^{3}\bbX$} (sum3 |- last);

\path[-stealth, draw] (sum0 |- first) -- (sum0) node [midway, right] {$b^{11}_{0}$};	
\path[-stealth, draw] (sum1 |- 0)     -- (sum1) node [midway, right] {$b^{11}_{1}$};	
\path[-stealth, draw] (sum2 |- 1)     -- (sum2) node [midway, right] {$b^{11}_{2}$};	
\path[-stealth, draw] (sum3 |- 2)     -- (sum3) node [midway, right] {$b^{11}_{3}$};

\path[-stealth, draw] (sum0) -- (sum1);	
\path[-stealth, draw] (sum1) -- (sum2);	
\path[-stealth, draw] (sum2) -- (sum3);	

\path[-stealth] (sum3) edge [above] node
                {$\bbB(\bbX;\bbS,\ccalB)$} ++ (0.7*\deltax, 0);

\end{tikzpicture}
\caption{Graph filters perform successive local node exchanges with neighbors, where the $k$-shifted signal $\bbS^k \bbX$ collects the information from $k$-hop neighborhood (shown by the increasing disks), and aggregate these shifted signals $\bbX, \ldots, \bbS^K \bbX$ with a set of parameters $[b^{11}_0, \ldots, b^{11}_K]^\top$ to generate the higher-level feature that accounts for the graph structure up to a neighborhood of radius $K$.}
\label{fig.graphcon}
\end{figure*}

\section{Experiments on movie recommendation systems}

We consider another experiment on movie recommendation systems to further corroborate our model. The goal is to predicate the rating a user would give to a specific movie \cite{Huang2018}. We build the underlying graph as the movie similarity network, where nodes are movies and edge weights are similarity strength between movies. The graph signal contains the ratings of movies given by a user, with missing values if those movies are not rated. We train the WD-GNN to predict the rating of a movie of our choice, based on ratings given to other movies.

\subsection{Implementation Details}

We use a subset of MovieLens 100k dataset, which includes $943$ users and $400$ movies with largest number of ratings \cite{Harper2016}. We compute the movie similarity as the Pearson correlation and keep ten edges with highest similarity for each node (movie) \cite{Huang2018}. Each user is a graph signal, where the signal value on each node is the rating of its associated movie given by this user, with zero value if that movie is not rated. The dataset is split into $90 \%$ for training and $10\%$ for testing. The rating of the movie of our choice is extracted as a label at the training phase, and zeroed out in the graph signal at the testing phase. 

We consider a WD-GNN comprising a graph filter and a single layer GNN. Both components have $G=64$ output features. All filters are of order $K=5$ and the nonlinearity is the ReLU. A local readout layer follows to map $64$ output features to a single scalar predicted rating at each node. We train the WD-GNN for $30$ epochs with batch size of $5$ samples, and use the ADAM optimizer with decaying factors $\beta_1=0.9$, $\beta_2 = 0.999$ and learning rate $\gamma = 5 \cdot 10^{-3}$. The performance is measured with the root mean squared error (RMSE) and the results are averaged for $10$ random dataset split.

\subsection{Results}

\begin{table*}
\begin{center}
\caption{Average (std. deviation) of the root mean squared error.}
\label{table3}
\begin{tabular}{|l|c|c|}
\hline
Architecture$/$Experiments & Train $\&$ test on same movie & Train on one movie $\&$ test on another movie \\ \hline
WD-GNN &  $0.8535 (\pm 0.0883)$ & $1.0920 (\pm 0.1053)$ \\  \hline
Online WD-GNN ($1$ gradient descent step) & $0.8524(\pm 0.1007)$ & $0.9759 (\pm 0.0927)$ \\  \hline
Online WD-GNN ($2$ gradient descent steps) & $0.8442(\pm 0.0993)$ & $0.9815 (\pm 0.0902)$ \\  \hline
Online WD-GNN ($3$ gradient descent steps) & $0.8585(\pm 0.1063)$ & $0.9909 (\pm 0.0901)$ \\  \hline
GNN & $0.8630(\pm 0.0884)$ & $1.0889 (\pm 0.1106)$ \\ \hline
Graph filter & $0.8589 (\pm 0.0895)$ & $ 1.0950 (\pm 0.1129)$ \\
\hline
\end{tabular}
\end{center}  \vspace{-4mm}
\end{table*}

We first consider the WD-GNN without online retraining and compare the proposed architecture with the GNN and the graph filter (Table \ref{table3}). We predict ratings for the movie that has the largest number of ratings in the dataset, \emph{Star War}. We see that though three architectures exhibit comparable performance, the WD-GNN performs best with the lowest RMSE. We also attribute this behavior to the increased learning ability of the WD-GNN obtained from its combined architecture. The second experiment considers the transferability, where we train the architectures on one movie (\emph{Star War}) and use the learned models to predict ratings for another movie (\emph{Contact}). In this case, the problem scenario changes that creates a mismatch between the training phase and the testing phase. This change, as expected in theoretical findings, degrades the performance of all three architectures severely as shown in Table \ref{table3}. In this case, the proposed online learning procedure for the WD-GNN becomes important to improve the performance, which is demonstrated next. 

We then run the WD-GNN with online learning. At testing time, we consider the recommendation system gets feedback from the user after it predicted the rating, which is used as the label to compute the instantaneous loss function in \eqref{eq:timevaryingp} for online learning. While the centralized online learning is available for recommendation systems, we keep in mind that the proposed WD-GNN can be retrained online in a distributed manner. We consider an online procedure experiencing $400$ testing users and for each user, the system performs $1$, $2$ or $3$ gradient descent steps to retrain the wide part based on the instantaneous signal and the feedback label, respectively. Results are shown in Table \ref{table3}. We observe significant performance improvements for the online WD-GNN when training and testing on different movies. As the testing scenario differs from the one used for training, online learning adapts the WD-GNN to the new scenario and thus improves the transferability. We remark that these improvements will be emphasized as the testing phase further goes on with more users involved. On the other hand, when training and testing on the same movie, online learning exhibits slight improvements on the performance with slightly lower RMSE. This is because the problem scenario does not change much and the offline phase has already trained the WD-GNN well.

\section{Online Wide and Deep GNN evaluation} 

In Fig. \ref{fig.graphcon}, we show details about the graph filter (graph convolution) [(3) in full paper] of order $K=3$ with $F=1$ input feature and $G=1$ output feature, i.e.,
\begin{equation} \label{eqn:graphConv11}
    \bbB(\bbX; \bbS, \ccalB) = \sum_{k = 0}^{K} b^{11}_{k} \bbS^{k} \bbX
\end{equation}
where filter parameters are $\ccalB = \{ b^{11}_0,\ldots,b^{11}_K \}$. In particular, the linear operation $\bbS \bbX$, also referred to as graph shift operation, leverages the graph structure to process the graph signal. It assigns to each node the aggregated signal from immediate neighbors and collects the graph neighborhood information. Shifting $\bbX$ for $k$ times aggregates information from $k$-hop neighborhood yielding the $k$-shifted signal $\bbS^k \bbX$. With a set of parameters $[b^{11}_0, \ldots, b^{11}_K]^\top \in \mathbb{R}^{K+1}$, the graph filter generates the higher-level feature that accounts for shifted signals up to a neighborhood of radius $K$, and thus reflects a more complete picture of network. With the shift-and-sum operation of graph signal $\bbX$ over graph structure $\bbS$, the graph filter is also considered as the convolution in graph domain. If further particularizing $\bbS$ the line graph and $\bbx$ the signal sampled at time instances, the graph filter \eqref{eqn:graphConv11} reduces to the conventional convolution.

Furthermore, we summarize the proposed online learning algorithm for the WD-GNN in Algorithm \ref{alg:OLWD} for a clear understanding.
 
 {\linespread{1.5}
\begin{algorithm}[H] \begin{algorithmic}[1]
\State \textbf{Input:} offline learned parameters $\ccalA^{\dag}$, $\ccalB^{\dag}$ by minimizing the ERM problem [cf. \eqref{eqn:ERM} in full paper] over the training dataset, and online step size $\gamma_t$
\State Fix the deep part parameters $\ccalA = \ccalA^{\dag}$ and set the initial wide part parameters $\ccalB_0 = \ccalB^{\dag}$
\For {$t = 0,1,2...$}
      \State Observe instantaneous graph signal $\bbX_t$, graph matrix $\bbS_t$ and loss function $J_t(\cdot)$
      \State Compute the instantaneous loss $J_t\big(\bbPsi(\bbX_t; \bbS_t, \ccalA^{\dag}, \ccalB_t)\big)$
	  \If {~requiring decentralized implementation~}
	      	  \State Update the wide part parameters in a distributed manner
	      	  \For {$i = 1,...,N$}
	          \State $\ccalB_{i,t+1} = \frac{1}{N_i+1} \big( \sum_{j:n_j \in \ccalN_i} \ccalB_{j,t} + \ccalB_{i,t}\big) - \gamma_t \nabla_\ccalB J_{i,t}\big(\bbPsi(\bbX_t; \bbS_t, \ccalA^{\dag}, \ccalB_{i,t})\big)$
	          \EndFor
	  \Else 
	          \State Update the wide part parameters in a centralized manner
	          \State $\ccalB_{t+1} = \ccalB_{t} - \gamma_t \nabla_\ccalB J_t\big(\bbPsi(\bbX_t; \bbS_t, \ccalA^{\dag}, \ccalB_{t})\big)$
      \EndIf
\EndFor
\end{algorithmic}
\caption{Online Learning Algorithm for the WD-GNN}\label{alg:OLWD} \end{algorithm}}


\bibliographystyle{IEEEbib}
\bibliography{myIEEEabrv,biblioOp}

\end{document}